\documentclass{article}

\usepackage{microtype}
\usepackage{graphicx}
\usepackage{subcaption}
\usepackage{booktabs}
\usepackage{hyperref}

\usepackage[preprint]{icml2026}

\usepackage{amsmath}
\usepackage{amssymb}
\usepackage{mathtools}
\usepackage{amsthm}

\usepackage[capitalize,noabbrev]{cleveref}

\usepackage{algorithm}
\usepackage{algorithmic}

\theoremstyle{plain}
\newtheorem{theorem}{Theorem}[section]
\newtheorem{proposition}[theorem]{Proposition}
\newtheorem{lemma}[theorem]{Lemma}
\newtheorem{corollary}[theorem]{Corollary}
\theoremstyle{definition}
\newtheorem{definition}[theorem]{Definition}

\newtheorem{example}[theorem]{Example}
\theoremstyle{remark}
\newtheorem{remark}[theorem]{Remark}

\newcommand{\R}{\mathbb{R}}
\newcommand{\E}{\mathbb{E}}
\newcommand{\Pp}{\mathbb{P}}
\newcommand{\bone}{\mathbf{1}}
\newcommand{\KL}{\mathrm{KL}}
\newcommand{\TV}{\mathrm{TV}}
\newcommand{\sign}{\mathrm{sign}}
\newcommand{\method}{\textsc{FairProj}}
\DeclareMathOperator*{\argmin}{arg\,min}

\icmltitlerunning{Projected Boosting with Fairness Constraints}

\begin{document}

\twocolumn[
  \icmltitle{Projected Boosting with Fairness Constraints:\\
  Quantifying the Cost of Fair Training Distributions}

  \icmlsetsymbol{equal}{*}

  \begin{icmlauthorlist}
    \icmlauthor{Amir Asiaee}{vumc}
    \icmlauthor{Kaveh Aryan}{kcl}
  \end{icmlauthorlist}

  \icmlaffiliation{vumc}{Department of Biostatistics, Vanderbilt University Medical Center, Nashville, TN 37232, USA}
  \icmlaffiliation{kcl}{Department of Informatics, King's College London, WC2R 2LS London, UK}

  \icmlcorrespondingauthor{Amir Asiaee}{amir.asiaeetaheri@vumc.org}

  \icmlkeywords{Fairness, Boosting, Algorithmic Fairness, Constrained Optimization, Machine Learning}

  \vskip 0.3in
]

\printAffiliationsAndNotice{}

\begin{abstract}
Boosting algorithms enjoy strong theoretical guarantees: when weak learners maintain positive edge, AdaBoost achieves geometric decrease of exponential loss. We study how to incorporate group fairness constraints into boosting while preserving analyzable training dynamics. Our approach, \method{}, projects the ensemble-induced exponential-weights distribution onto a convex set of distributions satisfying fairness constraints (as a \emph{reweighting surrogate}), then trains weak learners on this fair distribution. The key theoretical insight is that projecting the training distribution reduces the \emph{effective edge} of weak learners by a quantity controlled by the KL-divergence of the projection. We prove an exponential-loss bound where the convergence rate depends on weak learner edge minus a ``fairness cost'' term $\delta_t = \sqrt{\KL(w^t \| q^t)/2}$. This directly quantifies the accuracy-fairness tradeoff in boosting dynamics. Experiments on standard benchmarks validate the theoretical predictions and demonstrate competitive fairness-accuracy tradeoffs with stable training curves.
\end{abstract}

\section{Introduction}
\label{sec:intro}

AdaBoost and gradient boosting remain highly effective for tabular classification, offering interpretable models with well-understood training dynamics \citep{freund1997decision,schapire2013explaining}. A key advantage is the clean theoretical guarantee: under mild assumptions on weak learner quality, the exponential loss decreases geometrically at a rate determined by the weak learner edge.

Deploying machine learning in high-stakes domains requires models satisfying fairness constraints \citep{hardt2016equality}. A natural approach is to modify the sample weights used to train weak learners based on group-conditioned performance. However, such modifications break the mathematical machinery underlying AdaBoost's convergence proof: the weight update no longer minimizes the per-round exponential loss bound, and the resulting algorithms lack clean theoretical guarantees.

\paragraph{Our approach.}
We propose \method{}, which separates the ensemble-induced distribution from the fair training distribution. At each round $t$:
\begin{enumerate}
    \item We compute $q^t$, the exponential-weights distribution induced by the current ensemble $f_{t-1}$ (this is exactly the distribution AdaBoost would use).
    \item We \emph{project} $q^t$ onto a convex set of distributions satisfying fairness constraints, yielding fair training weights $w^t$.
    \item We train weak learner $h_t$ on $w^t$, but compute the boosting coefficient $\alpha_t$ using error measured under $q^t$.
\end{enumerate}

This design preserves the standard AdaBoost recursion for exponential loss (since $\alpha_t$ is computed under $q^t$), while allowing fairness constraints to influence weak learner selection. The \emph{cost} of fairness manifests as a reduction in effective edge: if $h_t$ achieves edge $\gamma^{(w)}_t$ under the fair distribution $w^t$, its edge under $q^t$ may be smaller due to distribution mismatch. We bound this gap using the KL-divergence of the projection.

\paragraph{Contributions.}
\begin{enumerate}
    \item \textbf{Projection-based fair boosting.} We introduce \method{}, which projects ensemble-induced distributions onto fairness constraint sets. We clarify that this enforces fairness as a \emph{reweighting surrogate}---constraints on training distributions, not direct guarantees on classifier fairness.

    \item \textbf{Edge transfer analysis.} We prove that the effective edge under $q^t$ is at least $\gamma^{(w)}_t - \delta_t$ where $\delta_t = \sqrt{\KL(w^t \| q^t)/2}$ (Lemma~\ref{lem:edge-transfer}). This directly quantifies the cost of fairness in terms of a divergence penalty.

    \item \textbf{Exponential loss bound.} We prove that \method{} achieves $L_{\exp}(f_T) \leq n \exp(-2 \sum_{t=1}^T (\gamma^{(w)}_t - \delta_t)^2)$ when $\gamma^{(w)}_t > \delta_t$ (Theorem~\ref{thm:main}). This recovers AdaBoost-like guarantees with an explicit fairness cost.

    \item \textbf{Experiments.} We evaluate on standard benchmarks, validating theoretical predictions and demonstrating competitive fairness-accuracy tradeoffs.
\end{enumerate}

\paragraph{What this paper does \emph{not} claim.}
We do not claim that projecting training distributions directly enforces fairness of the final classifier. The constraints are a \emph{surrogate} that influences weak learner selection. Whether this translates to classifier fairness is an empirical question we investigate experimentally.

\section{Related Work}
\label{sec:related}

\paragraph{Boosting as optimization.}
\citet{freund1997decision} introduced AdaBoost and proved that when weak learners achieve edge $\gamma_t > 0$, the training error decreases exponentially as $\exp(-2\sum_t \gamma_t^2)$. \citet{schapire2013explaining} later showed that AdaBoost can be viewed as coordinate descent on exponential loss, or equivalently as mirror descent with entropic regularization \citep{hazan2016introduction}. This optimization perspective motivates our approach: we add fairness constraints to the mirror descent formulation, with KL-projection being the natural choice in the entropic geometry.

\paragraph{Fairness definitions and approaches.}
\citet{dwork2012fairness} introduced individual fairness, requiring similar individuals to receive similar predictions. \citet{hardt2016equality} defined group fairness notions: \emph{equalized odds} requires equal true positive and false positive rates across groups, while \emph{equal opportunity} relaxes this to only true positive rates. \citet{zemel2013learning} proposed learning fair representations that encode task-relevant information while obfuscating group membership. \citet{hardt2016equality} also proposed post-processing predictions by group-specific thresholds to achieve equalized odds. Our work takes an in-processing approach, modifying training distributions rather than representations or predictions.

\paragraph{Reductions approach to fair classification.}
\citet{agarwal2018reductions} frame fair classification as a two-player game between a learner (minimizing loss) and an auditor (maximizing constraint violation), solved via exponentiated gradient on the Lagrangian. This reduction yields classifiers with provable $\epsilon$-approximate fairness by iteratively reweighting a cost-sensitive classification oracle. Our work differs in two ways: (1) we operate within the sequential boosting framework rather than the batch reduction setting, and (2) we analyze how fairness constraints affect the \emph{convergence rate} rather than equilibrium properties.

\paragraph{Fairness-aware boosting.}
\citet{kamiran2012preprocessing} proposed preprocessing approaches including reweighing training examples by group-label combinations to remove discrimination before training any classifier. \citet{song2024fab} introduced Fair AdaBoost (FAB), which modifies sample weights using a fairness-aware reweighting technique and derives an upper bound on a combined loss quantifying both error rate and unfairness. However, FAB modifies both the weight update \emph{and} the coefficient computation, which breaks the clean exponential-loss recursion underlying AdaBoost's convergence proof. \citet{cruz2022fairgbm} proposed FairGBM for gradient boosting, using smooth convex proxies for non-differentiable fairness metrics and a proxy-Lagrangian formulation with dual ascent. In contrast to these approaches, \method{} preserves the standard $\alpha_t$ calculation under the ensemble-induced distribution $q^t$, enabling our edge-transfer analysis that explicitly quantifies how distribution projection affects convergence.

\paragraph{Adaptive reweighting for fairness.}
Beyond boosting, adaptive reweighting methods with convergence guarantees have been developed. \citet{chai2022adaptive} proposed learning adaptive weights for each sample to achieve group-level balance, deriving a closed-form solution for weight assignment with theoretical convergence guarantees. \citet{hu2023adaptive} developed adaptive priority reweighing to improve fair classifier generalization. These methods operate outside the boosting framework; our contribution is showing how to incorporate reweighting into boosting while preserving its characteristic exponential-loss analysis and deriving explicit bounds on the ``fairness cost.''

\paragraph{Cost of fairness and impossibility results.}
\citet{corbett2017algorithmic} studied the cost of imposing fairness constraints on algorithmic decision-making, showing fundamental tradeoffs between accuracy and various fairness definitions. \citet{kleinberg2016inherent} proved that calibration and balance (equal false positive/negative rates across groups) cannot be simultaneously achieved except when base rates are equal. \citet{chouldechova2017fair} proved a related impossibility: predictive parity and equal error rates are incompatible unless base rates match. Our theoretical contribution makes the accuracy-fairness tradeoff explicit within boosting: the ``fairness cost'' $\delta_t = \sqrt{\KL(w^t \| q^t)/2}$ directly quantifies how much convergence rate is sacrificed for distributional fairness constraints.

\section{Preliminaries}
\label{sec:prelim}

\subsection{Problem Setup}

We consider binary classification with training data $\{(x_i, a_i, y_i)\}_{i=1}^n$ where $x_i \in \mathcal{X}$, $a_i \in \mathcal{A}$ is a protected attribute, and $y_i \in \{-1, +1\}$. An ensemble classifier is $f_T(x) = \sum_{t=1}^T \alpha_t h_t(x)$ with weak learners $h_t: \mathcal{X} \to \{-1, +1\}$ and coefficients $\alpha_t > 0$.

\subsection{AdaBoost and the Exponential-Weights Distribution}

The following definition is central to our analysis.

\begin{definition}[Exponential-weights distribution]
\label{def:q-dist}
Let $f_{t-1}(x) = \sum_{s=1}^{t-1} \alpha_s h_s(x)$ be the ensemble after $t-1$ rounds, with $f_0 \equiv 0$. The \emph{exponential-weights distribution} induced by $f_{t-1}$ is:
\begin{equation}
    q_i^t := \frac{\exp(-y_i f_{t-1}(x_i))}{\sum_{j=1}^n \exp(-y_j f_{t-1}(x_j))}.
    \label{eq:q-def}
\end{equation}
\end{definition}

The exponential loss satisfies a key recursion:
\begin{equation}
    L_{\exp}(f_t) = L_{\exp}(f_{t-1}) \cdot \E_{i \sim q^t}\left[\exp(-\alpha_t y_i h_t(x_i))\right].
    \label{eq:exp-recursion}
\end{equation}

Standard AdaBoost trains $h_t$ on $q^t$, sets $\alpha_t = \frac{1}{2}\ln\frac{1-\varepsilon_t}{\varepsilon_t}$ where $\varepsilon_t = \Pp_{i \sim q^t}(y_i \neq h_t(x_i))$, and achieves:
\begin{equation}
    \E_{i \sim q^t}\left[\exp(-\alpha_t y_i h_t(x_i))\right] = 2\sqrt{\varepsilon_t(1-\varepsilon_t)} \leq \exp(-2\gamma_t^2),
\end{equation}
where $\gamma_t = \frac{1}{2} - \varepsilon_t$ is the edge.

\subsection{Fairness Constraints as a Reweighting Surrogate}
\label{sec:fairness-constraints}

We consider constraints expressible as linear moment conditions on training distributions.

\begin{definition}[Fairness constraint set]
\label{def:constraint-set}
Let $g: [n] \to \R^K$ be bounded constraint features with $|g_k(i)| \leq G$ for all $k, i$. The \emph{fairness constraint set} with slack $\epsilon > 0$ is:
\begin{equation}
    \mathcal{C}_\epsilon = \left\{ w \in \Delta_n : \left| \sum_{i=1}^n w_i g_k(i) \right| \leq \epsilon, \ \forall k \in [K] \right\}.
    \label{eq:constraint-set}
\end{equation}
\end{definition}

\begin{remark}[Reweighting surrogate, not classifier fairness]
\label{rem:surrogate}
The constraint $|\langle w, g_k \rangle| \leq \epsilon$ bounds a weighted average over training examples, \emph{not} a property of the classifier's predictions. We call this a \emph{reweighting surrogate} for fairness. The intuition is that training on fair distributions may encourage weak learners that treat groups more equitably, but this is not a formal guarantee. We investigate the empirical relationship in Section~\ref{sec:experiments}.
\end{remark}

\begin{example}[Equal opportunity surrogate]
For binary $A \in \{0, 1\}$, define:
\begin{equation}
    g(i) = \bone[y_i = 1] \cdot \left(\bone[a_i = 1] - \bone[a_i = 0]\right),
\end{equation}
so that $\langle w, g \rangle = w(A{=}1,Y{=}1) - w(A{=}0,Y{=}1)$. The constraint $|\langle w, g \rangle| \leq \epsilon$ bounds the difference in \emph{total} weight placed on positive examples across groups.
\end{example}

\section{Method: Projected Fair Boosting}
\label{sec:method}

\subsection{Core Idea: Project, Train, but Compute $\alpha$ on $q$}

The key insight is to \emph{separate} the distribution used to train weak learners (fair) from the distribution used to compute boosting coefficients (ensemble-induced). This preserves the exponential-loss recursion while allowing fairness to influence weak learner selection.

At each round $t$:
\begin{enumerate}
    \item Compute $q^t$ from $f_{t-1}$ via \eqref{eq:q-def}.
    \item Project: $w^t = \argmin_{w \in \mathcal{C}_\epsilon} \KL(w \| q^t)$.
    \item Train: fit $h_t$ using sample weights $w^t$.
    \item Evaluate under $q^t$: compute $\varepsilon_t^{(q)} = \sum_i q_i^t \bone[y_i \neq h_t(x_i)]$.
    \item Set $\alpha_t = \frac{1}{2}\ln\frac{1-\varepsilon_t^{(q)}}{\varepsilon_t^{(q)}}$ and update $f_t = f_{t-1} + \alpha_t h_t$.
\end{enumerate}

Because $\alpha_t$ is computed using error under $q^t$, the recursion \eqref{eq:exp-recursion} remains valid. The fairness projection affects only which weak learner $h_t$ is selected, not the exponential-loss analysis.

\subsection{KL Projection: Closed-Form Solution}

The projection $w^t = \argmin_{w \in \mathcal{C}_\epsilon} \KL(w \| q^t)$ has a closed-form dual characterization.

\begin{lemma}[KL projection via dual]
\label{lem:kl-dual}
Let $q \in \Delta_n$ and $\mathcal{C}_\epsilon$ be the constraint set \eqref{eq:constraint-set}. Define:
\begin{equation}
    Z(\lambda) := \sum_{i=1}^n q_i \exp\left(-\sum_{k=1}^K \lambda_k g_k(i)\right).
\end{equation}
The optimal dual variables solve:
\begin{equation}
    \lambda^* \in \argmin_{\lambda \in \R^K} \left\{ \log Z(\lambda) + \epsilon \|\lambda\|_1 \right\}.
    \label{eq:dual-problem}
\end{equation}
The projected distribution is:
\begin{equation}
    w_i^* = \frac{q_i \exp(-(\lambda^*)^\top g(i))}{Z(\lambda^*)},
    \label{eq:w-from-dual}
\end{equation}
and the projection divergence is:
\begin{equation}
    \KL(w^* \| q) = -\log Z(\lambda^*) - \epsilon \|\lambda^*\|_1.
    \label{eq:kl-value}
\end{equation}
\end{lemma}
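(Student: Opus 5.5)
The plan is to derive the statement from Lagrangian duality for the convex program $\min_{w \in \mathcal{C}_\epsilon} \KL(w \| q)$. The steps are: rewrite the constraints in linear form, compute the dual function in closed form by a Gibbs-variational argument, invoke strong duality, and read off \eqref{eq:w-from-dual} and \eqref{eq:kl-value} from a saddle point.

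\textbf{Step 1 (Lagrangian).} Write each two-sided constraint $|\langle w, g_k\rangle| \le \epsilon$ as the two affine inequalities $\langle w, g_k\rangle - \epsilon \le 0$ and $-\langle w, g_k\rangle - \epsilon \le 0$. Attach multipliers $\mu_k^+, \mu_k^- \ge 0$ to them, and keep $w \in \Delta_n$ as an explicit domain constraint. With $\lambda := \mu^+ - \mu^-$ the Lagrangian becomes
\[
\mathcal{L}(w,\mu^\pm) = \KL(w\|q) + \sum_i w_i \lambda^\top g(i) - \epsilon \sum_k (\mu_k^+ + \mu_k^-).
\]

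\textbf{Step 2 (dual function).} Minimizing $\KL(w\|q) + \E_w[\lambda^\top g]$ over $\Delta_n$ is the Gibbs/Donsker--Varadhan variational problem. Its unique minimizer is $w_i(\lambda) = q_i e^{-\lambda^\top g(i)}/Z(\lambda)$, with minimum value $-\log Z(\lambda)$. To verify this, plug in $w(\lambda)$, or note that $\KL(w\|q) + \E_w[\lambda^\top g] = \KL(w\|w(\lambda)) - \log Z(\lambda)$ for every $w$. For fixed $\lambda$, maximizing $-\epsilon\sum_k(\mu_k^++\mu_k^-)$ subject to $\mu^+-\mu^-=\lambda$ forces the supports to be disjoint, so $\mu_k^+ + \mu_k^- = |\lambda_k|$. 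The dual problem is therefore $\max_\lambda \{-\log Z(\lambda) - \epsilon\|\lambda\|_1\}$, which is exactly \eqref{eq:dual-problem} after a sign flip.

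\textbf{Step 3 (strong duality and attainment).} This is the main obstacle, because the statement tacitly assumes that $\lambda^*$ exists and that the duality gap is zero.
\begin{itemize}
\item The objective is convex, and finite on all of $\Delta_n$ because $q$ has full support: exponential weights are strictly positive.
\item All constraints are affine, so the refined Slater condition only requires feasibility, $\mathcal{C}_\epsilon \neq \emptyset$. I will state this as a standing assumption; it holds, for instance, whenever some distribution satisfies the constraints strictly.
\item Under feasibility, strong duality holds and the dual optimum is attained, because the primal value is finite. This gives a $\lambda^*$ and equality of the primal and dual values.
\end{itemize}

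\textbf{Step 4 (recovering $w^*$ and the KL value).} The primal minimizer $w^*$ is unique because KL is strictly convex in its first argument. At a saddle point $(w^*,\lambda^*)$, $w^*$ minimizes $\mathcal{L}(\cdot,\lambda^*)$. Since that minimizer is unique by Step 2, $w^* = w(\lambda^*)$, which is \eqref{eq:w-from-dual}. Finally, $\KL(w^*\|q)$ equals the primal optimum, which equals the dual optimum $-\log Z(\lambda^*) - \epsilon\|\lambda^*\|_1$; this is \eqref{eq:kl-value}.

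As a consistency check, the same identity also follows directly. Compute $\KL(w^*\|q) = -\lambda^{*\top}\E_{w^*}[g] - \log Z(\lambda^*)$, then apply complementary slackness in the form $\lambda_k^*\langle w^*, g_k\rangle = -\epsilon|\lambda_k^*|$. This form follows from the first-order optimality of \eqref{eq:dual-problem}, namely $-\E_{w(\lambda^*)}[g_k] \in \epsilon\,\partial|\lambda_k^*|$.
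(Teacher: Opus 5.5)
Your route is the paper's: a Lagrangian with split multipliers, collapsing $\mu_k^++\mu_k^-$ to $|\lambda_k|$, then strong duality. Your Gibbs identity $\KL(w\|q)+\E_w[\lambda^\top g]=\KL(w\|w(\lambda))-\log Z(\lambda)$ and the saddle-point recovery of $w^*$ are cleaner than the paper's stationarity argument.

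The gap is in Step 3, which you rightly single out as the crux (the paper just writes ``by strong duality''). The refined Slater condition for affine inequality constraints needs a feasible point in the \emph{relative interior} of the domain, i.e.\ a feasible $w$ with $w_i>0$ for all $i$. Nonemptiness of $\mathcal{C}_\epsilon$ is not enough, and finiteness of $\KL(\cdot\|q)$ on all of $\Delta_n$ does not remove this requirement. Under mere feasibility the duality gap is still zero, by compactness of $\Delta_n$ and Sion's minimax theorem. However, the dual minimum can fail to be attained, and then the lemma itself is false. For example, take $n=2$, $K=1$, $g(1)=\epsilon$, $g(2)=2\epsilon$. Then $\langle w,g\rangle=\epsilon(1+w_2)$, so $\mathcal{C}_\epsilon=\{(1,0)\}$ and $\KL(w^*\|q)=\log(1/q_1)$. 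Meanwhile $\log Z(\lambda)+\epsilon|\lambda|$ equals $\log(q_1+q_2e^{-\epsilon\lambda})$ for $\lambda\ge 0$ and is positive for $\lambda<0$, so its infimum $\log q_1$ is approached only as $\lambda\to\infty$. No $\lambda^*$ exists, and $w^*$ has a zero coordinate, so it is not of the form \eqref{eq:w-from-dual} for any finite $\lambda$.

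The repair is to make your parenthetical the standing assumption. If some $w\in\Delta_n$ satisfies all constraints strictly, then $(1-\theta)w+\theta q$ for small $\theta>0$ is a full-support feasible point. Slater then applies with dual attainment, and your Steps 3--4 go through unchanged.

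Separately, the consistency check at the end has a sign error. Because $\partial_{\lambda_k}\log Z(\lambda)=-\E_{w(\lambda)}[g_k]$, first-order optimality in \eqref{eq:dual-problem} is $\E_{w(\lambda^*)}[g_k]\in\epsilon\,\partial|\lambda_k^*|$. Complementary slackness therefore reads $\lambda_k^*\langle w^*,g_k\rangle=+\epsilon|\lambda_k^*|$: a positive $\lambda_k^*$ corresponds to the active upper constraint $\langle w^*,g_k\rangle=\epsilon$. With your sign the check would give $-\log Z(\lambda^*)+\epsilon\|\lambda^*\|_1$, contradicting \eqref{eq:kl-value}; with the corrected sign it reproduces \eqref{eq:kl-value}. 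Step 4 does not rely on this aside, so only the check needs fixing.
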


\begin{proof}
The Lagrangian for \eqref{eq:constraint-set} with KL objective is:
\begin{align}
    \mathcal{L}(w, \lambda^+, \lambda^-) &= \sum_i w_i \log\frac{w_i}{q_i} + \sum_k \lambda_k^+ \left(\langle w, g_k \rangle - \epsilon\right) \nonumber \\
    &\quad + \sum_k \lambda_k^- \left(-\langle w, g_k \rangle - \epsilon\right).
\end{align}
Stationarity in $w_i$ gives $w_i \propto q_i \exp(-(\lambda^+ - \lambda^-)^\top g(i))$. Setting $\lambda = \lambda^+ - \lambda^-$ and substituting back yields the dual. The value \eqref{eq:kl-value} follows from strong duality; note $\log Z(0) = 0$, so the optimal value is non-positive, hence $\KL(w^* \| q) \geq 0$ as required.
\end{proof}

\begin{remark}[Computational cost]
The dual problem \eqref{eq:dual-problem} is $K$-dimensional convex optimization. For common fairness notions, $K \in \{1, 2\}$. We solve it via L-BFGS-B or projected gradient descent, warm-starting with the previous round's solution for efficiency.
\end{remark}

\subsection{The \method{} Algorithm}

\begin{algorithm}[tb]
\caption{\method{}: Projected Fair Boosting}
\label{alg:fairproj}
\begin{algorithmic}[1]
\STATE {\bfseries Input:} Data $\{(x_i, a_i, y_i)\}_{i=1}^n$, weak learner class $\mathcal{H}$, rounds $T$, slack $\epsilon$
\STATE {\bfseries Initialize:} $f_0(x) \equiv 0$, $\lambda^0 \leftarrow \mathbf{0} \in \R^K$
\FOR{$t = 1$ to $T$}
    \STATE Compute $q^t$ from $f_{t-1}$: \quad $q_i^t \propto \exp(-y_i f_{t-1}(x_i))$
    \STATE {\bfseries Project:} Solve \eqref{eq:dual-problem} (warm-start with $\lambda^{t-1}$) to get $\lambda^t$
    \STATE Set $w_i^t \propto q_i^t \exp(-(\lambda^t)^\top g(i))$
    \STATE Compute $\delta_t \leftarrow \sqrt{\KL(w^t \| q^t) / 2}$ via \eqref{eq:kl-value}
    \STATE {\bfseries Train:} Fit $h_t \in \mathcal{H}$ on $\{(x_i, y_i)\}$ with weights $w^t$
    \STATE {\bfseries Evaluate on $q^t$:} $\varepsilon_t^{(q)} \leftarrow \sum_i q_i^t \bone[y_i \neq h_t(x_i)]$
    \IF{$\varepsilon_t^{(q)} \geq 0.5$}
        \STATE {\bfseries break} \COMMENT{No useful weak learner under $q^t$}
    \ENDIF
    \STATE Set $\alpha_t \leftarrow \frac{1}{2} \ln \frac{1 - \varepsilon_t^{(q)}}{\varepsilon_t^{(q)}}$
    \STATE Update $f_t \leftarrow f_{t-1} + \alpha_t h_t$
\ENDFOR
\STATE {\bfseries Output:} Classifier $F_T(x) = \sign(f_T(x))$
\end{algorithmic}
\end{algorithm}

\paragraph{Warm-starting as ``forward-aware'' computation.}
We warm-start the dual solver with $\lambda^{t-1}$ from the previous round. This is a computational optimization that reduces solver iterations. We do \emph{not} claim this provides additional theoretical benefits; the guarantee in Theorem~\ref{thm:main} holds regardless of how the projection is solved.

\section{Theoretical Analysis}
\label{sec:theory}

\subsection{Edge Transfer Under Distribution Mismatch}

The key technical result bounds how edge changes when evaluated under different distributions.

\begin{definition}[Edge]
For weak learner $h$ and distribution $p \in \Delta_n$, the \emph{edge} is:
\begin{equation}
    \gamma(p) := \frac{1}{2} \E_{i \sim p}[y_i h(x_i)] = \frac{1}{2} - \Pp_{i \sim p}(y_i \neq h(x_i)).
\end{equation}
\end{definition}

\begin{lemma}[Edge transfer bound]
\label{lem:edge-transfer}
For any $h: \mathcal{X} \to \{-1, +1\}$ and distributions $p, q \in \Delta_n$:
\begin{equation}
    |\gamma(p) - \gamma(q)| \leq \TV(p, q) \leq \sqrt{\frac{1}{2} \KL(p \| q)},
    \label{eq:edge-transfer}
\end{equation}
where $\TV(p, q) = \frac{1}{2}\sum_i |p_i - q_i|$ is total variation distance.
\end{lemma}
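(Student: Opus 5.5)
The plan is to prove the two inequalities in \eqref{eq:edge-transfer} separately. The first is a direct computation from the definition of edge. The second is Pinsker's inequality, which I will take as a standard fact.

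For the first inequality, write $z_i := y_i h(x_i) \in \{-1,+1\}$. By the definition of edge,
\[
\gamma(p) - \gamma(q) = \tfrac{1}{2}\sum_{i=1}^n (p_i - q_i)\, z_i .
\]
Taking absolute values and using $|z_i| = 1$ gives $|\gamma(p)-\gamma(q)| \le \tfrac{1}{2}\sum_i |p_i - q_i| = \TV(p,q)$. This already yields the first bound.

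There is a sharper argument worth noting. Since $\sum_i (p_i - q_i) = 0$, we may subtract any constant $c$ from $z_i$ without changing the sum. Choosing $c = 0$ works directly here, because the range of $z$ has half-width $1$. Equivalently, $\gamma(p) = \tfrac12 - \Pp_{i\sim p}(y_i\neq h(x_i))$, and the difference of probabilities of a single event under $p$ and $q$ is at most $\TV(p,q)$. Both routes give the same constant. I would present the second, since it is the cleanest.

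For the second inequality, I invoke Pinsker's inequality, $\TV(p,q) \le \sqrt{\KL(p\|q)/2}$, with $\TV$ normalized as in the statement (half the $\ell_1$ norm) and $\KL$ in nats, matching the natural-log convention in Lemma~\ref{lem:kl-dual}.

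If a self-contained argument is wanted, I would use the standard reduction to two points. Let $S = \{i : p_i \ge q_i\}$, so that $\TV(p,q) = p(S) - q(S)$. The data-processing inequality for $\KL$ under the coarsening $i \mapsto \bone[i\in S]$ reduces the claim to Bernoulli distributions. For Bernoulli$(a)$ versus Bernoulli$(b)$, one shows that
\[
\phi(b) = a\log\frac{a}{b} + (1-a)\log\frac{1-a}{1-b} - 2(a-b)^2
\]
satisfies $\phi(a) = 0$. Its derivative in $b$ is $(b-a)\bigl(\tfrac{1}{b(1-b)} - 4\bigr)$, which has the sign of $b - a$ because $b(1-b) \le 1/4$. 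Hence $\phi \ge 0$.

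The only point requiring care is the constants. I need to check that the $\TV$ normalization and the natural-log $\KL$ give exactly the factor $\tfrac12$ under the square root. I also need to check that $\KL(p\|q)$ is finite, or else the bound holds trivially.
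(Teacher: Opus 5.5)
Your proof is correct and follows the paper's argument: the triangle inequality with $|y_i h(x_i)| = 1$ gives the first bound, and Pinsker's inequality gives the second. Your optional self-contained derivation of Pinsker (coarsening to $S=\{i : p_i \ge q_i\}$ via data processing, then the Bernoulli case using the derivative $(b-a)\bigl(\frac{1}{b(1-b)}-4\bigr)$) is also correct, though the paragraph on subtracting a constant $c$ adds nothing, since $c=0$ is no shift at all.
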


\begin{proof}
Let $m_i = y_i h(x_i) \in \{-1, +1\}$ denote margins. Then:
\begin{align}
    |\gamma(p) - \gamma(q)| &= \frac{1}{2} \left| \sum_i (p_i - q_i) m_i \right| \\
    &\leq \frac{1}{2} \sum_i |p_i - q_i| \cdot |m_i| = \TV(p, q).
\end{align}
The second inequality is Pinsker's inequality.
\end{proof}

\begin{corollary}
\label{cor:edge-transfer}
If $w^t$ is obtained by projecting $q^t$ onto $\mathcal{C}_\epsilon$, and $h_t$ achieves edge $\gamma_t^{(w)} := \gamma(w^t)$ under $w^t$, then its edge under $q^t$ satisfies:
\begin{equation}
    \gamma_t^{(q)} := \gamma(q^t) \geq \gamma_t^{(w)} - \delta_t, \quad \text{where } \delta_t = \sqrt{\frac{1}{2}\KL(w^t \| q^t)}.
\end{equation}
\end{corollary}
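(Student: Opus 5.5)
The corollary follows directly from Lemma~\ref{lem:edge-transfer}, applied with $p = w^t$, $q = q^t$, and $h = h_t$. Both $w^t$ and $q^t$ lie in $\Delta_n$. For $q^t$, this is immediate from Definition~\ref{def:q-dist}, since the weights are positive and normalized. For $w^t$, it holds because it is the minimizer over $\mathcal{C}_\epsilon \subseteq \Delta_n$; equivalently, by Lemma~\ref{lem:kl-dual}, it has the normalized Gibbs form \eqref{eq:w-from-dual}. Also, $h_t$ maps into $\{-1,+1\}$ by assumption. So the hypotheses of the lemma are met, and it gives
\begin{equation*}
    |\gamma(w^t) - \gamma(q^t)| \leq \TV(w^t, q^t) \leq \sqrt{\tfrac{1}{2}\KL(w^t \| q^t)} = \delta_t .
\end{equation*}

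Next I drop the absolute value in the needed direction: $\gamma(w^t) - \gamma(q^t) \leq |\gamma(w^t) - \gamma(q^t)| \leq \delta_t$. Rearranging yields $\gamma_t^{(q)} = \gamma(q^t) \geq \gamma(w^t) - \delta_t = \gamma_t^{(w)} - \delta_t$, which is the claim.

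The only point needing care is the direction of the KL divergence. Pinsker's inequality is symmetric in the sense that $\TV(p,q) \leq \sqrt{\KL(p\|q)/2}$ holds for either ordering, since total variation is symmetric. So the ordering $\KL(w^t\|q^t)$, which is the quantity computed via \eqref{eq:kl-value}, is legitimate. A second check is that $\KL(w^t\|q^t)$ is finite, so that $\delta_t$ is well defined. This holds because $q^t_i > 0$ for every $i$, again by the exponential form in Definition~\ref{def:q-dist}. Beyond these checks there is no real obstacle; the argument is a direct specialization of Lemma~\ref{lem:edge-transfer}.
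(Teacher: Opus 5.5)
Your proposal is correct and matches the paper, which gives no separate argument and treats the corollary as an immediate specialization of Lemma~\ref{lem:edge-transfer} with $p = w^t$ and $q = q^t$, keeping only the one-sided consequence of the absolute-value bound. Your extra checks are sound points the paper leaves implicit: that $w^t, q^t \in \Delta_n$, that $\KL(w^t \| q^t)$ is finite because every $q^t_i > 0$, and that only membership in $\Delta_n$ (not the projection property) is needed.
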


\subsection{Main Theorem: Exponential Loss with Fairness Cost}

\begin{theorem}[Exponential loss bound with fairness cost]
\label{thm:main}
Consider Algorithm~\ref{alg:fairproj} running for $T$ rounds. Let $\gamma_t^{(w)} = \gamma(w^t)$ be the edge of $h_t$ under the fair training distribution, and $\delta_t = \sqrt{\KL(w^t \| q^t)/2}$ be the fairness cost at round $t$. Suppose $\gamma_t^{(w)} > \delta_t$ for all $t$ (the weak learner has positive effective edge). Then:
\begin{equation}
    L_{\exp}(f_T) \leq n \exp\left(-2 \sum_{t=1}^T (\gamma_t^{(w)} - \delta_t)^2\right).
    \label{eq:main-bound}
\end{equation}
Consequently, the training error satisfies:
\begin{equation}
    \frac{1}{n} \sum_{i=1}^n \bone[\sign(f_T(x_i)) \neq y_i] \leq \exp\left(-2 \sum_{t=1}^T (\gamma_t^{(w)} - \delta_t)^2\right).
\end{equation}
Moreover, by construction, $|\langle w^t, g_k \rangle| \leq \epsilon$ for all $k, t$.
\end{theorem}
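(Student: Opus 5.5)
The plan is to telescope the exponential-loss recursion \eqref{eq:exp-recursion} and bound each per-round factor using the $q^t$-edge. Corollary~\ref{cor:edge-transfer} converts that $q^t$-edge into the fair edge minus the fairness cost.

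\emph{Base case.} Since $f_0\equiv 0$, we have $L_{\exp}(f_0)=\sum_i e^{0}=n$. Here I use the unnormalized convention $L_{\exp}(f)=\sum_i \exp(-y_i f(x_i))$, which is the convention under which \eqref{eq:exp-recursion} and the factor $n$ in \eqref{eq:main-bound} are consistent.

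\emph{Per-round factor.} Fix round $t$. By hypothesis $\gamma_t^{(w)}>\delta_t$, so Corollary~\ref{cor:edge-transfer} gives $\gamma_t^{(q)}\ge \gamma_t^{(w)}-\delta_t>0$. Hence $\varepsilon_t^{(q)}=\tfrac12-\gamma_t^{(q)}<\tfrac12$, the break in Algorithm~\ref{alg:fairproj} is never triggered, and all $T$ rounds execute with $\alpha_t>0$.

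If $\varepsilon_t^{(q)}=0$, then $\alpha_t$ is infinite. I would treat this as the degenerate case in which the loss drops to $0$ and the bound holds trivially, or else handle it by a limiting argument.

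Otherwise, split $\E_{i\sim q^t}[\exp(-\alpha_t y_i h_t(x_i))]$ into correctly and incorrectly classified points. This gives
\[
(1-\varepsilon_t^{(q)})e^{-\alpha_t}+\varepsilon_t^{(q)}e^{\alpha_t}=2\sqrt{\varepsilon_t^{(q)}(1-\varepsilon_t^{(q)})}=\sqrt{1-4(\gamma_t^{(q)})^2}\le \exp\bigl(-2(\gamma_t^{(q)})^2\bigr),
\]
where the last step uses $1-x\le e^{-x}$. This is exactly the standard AdaBoost computation. It is valid because $\alpha_t$ is tuned to the error under $q^t$, the same distribution that appears in \eqref{eq:exp-recursion}.

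\emph{Monotonicity step.} This is where the fairness cost enters. The map $\gamma\mapsto\exp(-2\gamma^2)$ is decreasing on $[0,\infty)$, and we have $\gamma_t^{(q)}\ge\gamma_t^{(w)}-\delta_t\ge 0$. Therefore $\exp(-2(\gamma_t^{(q)})^2)\le\exp(-2(\gamma_t^{(w)}-\delta_t)^2)$.

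This is the step I expect to be the main subtlety. Both quantities must be nonnegative before squaring, which is precisely why the hypothesis $\gamma_t^{(w)}>\delta_t$ is needed. Without it, a negative lower bound on $\gamma_t^{(q)}$ would give no control after squaring.

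\emph{Conclusion.} Multiplying the per-round bounds over $t=1,\dots,T$ through the recursion yields \eqref{eq:main-bound}.

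For the training-error claim, note that $\bone[\sign(f_T(x_i))\ne y_i]\le \exp(-y_if_T(x_i))$ for every $i$. Summing over $i$ and dividing by $n$ gives the stated bound. One should adopt a convention for $\sign(0)$, but the inequality holds either way, since $e^{0}=1$.

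Finally, the feasibility claim $|\langle w^t,g_k\rangle|\le\epsilon$ holds by definition, since $w^t$ is the KL projection onto $\mathcal{C}_\epsilon$ and so lies in $\mathcal{C}_\epsilon$. Existence and uniqueness of this projection require $\mathcal{C}_\epsilon$ to be nonempty; I would state this as an implicit assumption. The KL objective is strictly convex, and on a nonempty compact convex feasible set the minimizer exists and is unique. The minimizer is also characterized by Lemma~\ref{lem:kl-dual}, and strong duality there gives exactly the value $\delta_t$ used in the algorithm.
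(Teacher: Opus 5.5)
Your proposal is correct and follows the paper's proof step for step. Both arguments telescope the recursion \eqref{eq:exp-recursion} from $L_{\exp}(f_0)=n$, apply the standard AdaBoost identity $2\sqrt{\varepsilon_t^{(q)}(1-\varepsilon_t^{(q)})}=\sqrt{1-4(\gamma_t^{(q)})^2}\le \exp(-2(\gamma_t^{(q)})^2)$, use Corollary~\ref{cor:edge-transfer} with monotonicity of squaring on $[0,\infty)$, bound the training error via $\bone[\sign(f)\neq y]\le e^{-yf}$, and get feasibility by construction. Your additional remarks are sound refinements that the paper leaves implicit: the hypothesis rules out the break in Algorithm~\ref{alg:fairproj}, the degenerate case $\varepsilon_t^{(q)}=0$ needs separate treatment, $\mathcal{C}_\epsilon$ must be nonempty, and $\delta_t\ge 0$ suffices rather than $\delta_t>0$.
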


\begin{proof}
By the exponential-loss recursion (Definition~\ref{def:q-dist}):
\begin{equation}
    L_{\exp}(f_t) = L_{\exp}(f_{t-1}) \cdot \E_{i \sim q^t}[\exp(-\alpha_t y_i h_t(x_i))].
\end{equation}
With $\alpha_t = \frac{1}{2}\ln\frac{1-\varepsilon_t^{(q)}}{\varepsilon_t^{(q)}}$ and $\varepsilon_t^{(q)} = \frac{1}{2} - \gamma_t^{(q)}$, the standard AdaBoost calculation gives:
\begin{equation}
    \E_{i \sim q^t}[\exp(-\alpha_t y_i h_t(x_i))] = 2\sqrt{\varepsilon_t^{(q)}(1-\varepsilon_t^{(q)})} = \sqrt{1 - 4(\gamma_t^{(q)})^2}.
\end{equation}
Using $\sqrt{1-x} \leq e^{-x/2}$ for $x \in [0, 1]$:
\begin{equation}
    \sqrt{1 - 4(\gamma_t^{(q)})^2} \leq \exp(-2(\gamma_t^{(q)})^2).
\end{equation}
By Corollary~\ref{cor:edge-transfer}, $\gamma_t^{(q)} \geq \gamma_t^{(w)} - \delta_t$. Since $x \mapsto x^2$ is increasing for $x > 0$ and we assume $\gamma_t^{(w)} > \delta_t$:
\begin{equation}
    \exp(-2(\gamma_t^{(q)})^2) \leq \exp(-2(\gamma_t^{(w)} - \delta_t)^2).
\end{equation}
Telescoping from $L_{\exp}(f_0) = n$:
\begin{align}
    L_{\exp}(f_T) &\leq n \prod_{t=1}^T \exp(-2(\gamma_t^{(w)} - \delta_t)^2) \nonumber \\
    &= n \exp\left(-2\sum_{t=1}^T (\gamma_t^{(w)} - \delta_t)^2\right).
\end{align}
The training error bound follows from $\bone[\sign(f) \neq y] \leq \exp(-yf)$. The constraint satisfaction $|\langle w^t, g_k \rangle| \leq \epsilon$ holds by definition of $\mathcal{C}_\epsilon$.
\end{proof}

\paragraph{Interpretation.}
Theorem~\ref{thm:main} shows that \method{} achieves AdaBoost-like exponential convergence, but with an \emph{effective edge} $\gamma_t^{(w)} - \delta_t$ instead of $\gamma_t^{(w)}$. The term $\delta_t = \sqrt{\KL(w^t \| q^t)/2}$ is the ``fairness cost''---the price paid for training on a fair distribution rather than the ensemble-induced distribution.

\begin{remark}[Comparison to unconstrained AdaBoost]
When $\epsilon \to \infty$, the constraint set $\mathcal{C}_\epsilon = \Delta_n$, so $w^t = q^t$ and $\delta_t = 0$. The bound reduces to standard AdaBoost: $L_{\exp}(f_T) \leq n \exp(-2\sum_t (\gamma_t)^2)$.
\end{remark}

\begin{remark}[Tradeoff controlled by $\epsilon$]
Smaller $\epsilon$ enforces tighter surrogate fairness but may increase $\delta_t$ (larger divergence from $q^t$), reducing the effective edge. Larger $\epsilon$ allows $w^t \approx q^t$ but provides weaker fairness enforcement. This tradeoff is explicit in the bound.
\end{remark}

\subsection{Sufficient Condition for Positive Effective Edge}

The assumption $\gamma_t^{(w)} > \delta_t$ requires that weak learners trained on fair distributions have edge exceeding the fairness cost. We provide a sufficient condition.

\begin{proposition}
\label{prop:sufficient}
Suppose the weak learner class $\mathcal{H}$ achieves edge at least $\gamma_{\min} > 0$ under any distribution in $\mathcal{C}_\epsilon$. Suppose further that $\KL(w^t \| q^t) \leq D$ for some $D > 0$ (e.g., by choosing $\epsilon$ not too small). Then:
\begin{equation}
    \gamma_t^{(w)} - \delta_t \geq \gamma_{\min} - \sqrt{D/2}.
\end{equation}
If $\gamma_{\min} > \sqrt{D/2}$, the effective edge is positive.
\end{proposition}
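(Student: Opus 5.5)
The argument combines two separate lower and upper bounds, one for each term of the effective edge $\gamma_t^{(w)} - \delta_t$.

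\textbf{Lower-bounding the edge.} By construction in Algorithm~\ref{alg:fairproj}, $w^t$ is the KL projection of $q^t$ onto $\mathcal{C}_\epsilon$, so $w^t \in \mathcal{C}_\epsilon$. Lemma~\ref{lem:kl-dual} gives this membership through the dual characterization, provided the constraint set is nonempty so that the projection exists. The hypothesis on $\mathcal{H}$ then applies at the distribution $w^t$. I read ``achieves edge at least $\gamma_{\min}$ under any distribution in $\mathcal{C}_\epsilon$'' as saying that the weak learner $h_t$ returned when training on $w^t$ satisfies $\gamma_t^{(w)} = \gamma(w^t) \ge \gamma_{\min}$.

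\textbf{Upper-bounding the fairness cost.} The map $x \mapsto \sqrt{x/2}$ is monotone nondecreasing on $[0,\infty)$, and $0 \le \KL(w^t\|q^t) \le D$. The lower bound $\KL \ge 0$ is guaranteed by Lemma~\ref{lem:kl-dual}. Hence
\[
\delta_t = \sqrt{\KL(w^t\|q^t)/2} \le \sqrt{D/2}.
\]

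\textbf{Combining.} Subtracting the two bounds gives $\gamma_t^{(w)} - \delta_t \ge \gamma_{\min} - \sqrt{D/2}$. If $\gamma_{\min} > \sqrt{D/2}$, the right side is strictly positive, so the effective edge is positive. This is exactly the hypothesis $\gamma_t^{(w)} > \delta_t$ needed in Theorem~\ref{thm:main}. As a side consequence, Corollary~\ref{cor:edge-transfer} then gives $\gamma_t^{(q)} > 0$, so $\varepsilon_t^{(q)} < 1/2$ and the break condition in Algorithm~\ref{alg:fairproj} never fires.

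\textbf{Main obstacle.} The only delicate point is interpretive: making precise that the hypothesis on $\mathcal{H}$ applies to the specific learner $h_t$ fit on $w^t$. I would state this explicitly as an oracle assumption: training on $w^t \in \mathcal{C}_\epsilon$ returns some $h_t$ with $\gamma(w^t) \ge \gamma_{\min}$. I would also note that nonemptiness of $\mathcal{C}_\epsilon$ is implicitly required. Everything else is a two-line monotonicity argument.
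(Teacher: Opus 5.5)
Your argument is correct and is the immediate reasoning the paper leaves implicit (it states this proposition without a written proof). Membership $w^t\in\mathcal{C}_\epsilon$ gives $\gamma_t^{(w)}\ge\gamma_{\min}$, and monotonicity of $x\mapsto\sqrt{x/2}$ together with $0\le\KL(w^t\|q^t)\le D$ gives $\delta_t\le\sqrt{D/2}$. Your explicit oracle reading of the weak-learning hypothesis (the $h_t$ actually returned on $w^t$ attains edge at least $\gamma_{\min}$) and the nonemptiness caveat on $\mathcal{C}_\epsilon$ are exactly the points worth making precise; just note that $w^t\in\mathcal{C}_\epsilon$ and $\KL\ge 0$ follow from the definition of the projection and Gibbs' inequality, not from Lemma~\ref{lem:kl-dual}.
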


This formalizes the intuition that if weak learners are ``strong enough'' and fairness constraints are ``not too tight,'' convergence is guaranteed.

\section{Experiments}
\label{sec:experiments}

We evaluate \method{} on standard fairness benchmarks, focusing on three questions:
\begin{enumerate}
    \item Does Theorem~\ref{thm:main} accurately predict training dynamics?
    \item Does the reweighting surrogate improve classifier fairness?
    \item How do fairness-accuracy tradeoffs compare to baselines?
\end{enumerate}

\subsection{Experimental Setup}

\paragraph{Datasets.}
We evaluate on three standard tabular benchmarks: Adult (income prediction; 48,842 examples), German Credit (credit risk; 1,000 examples), and COMPAS (two-year recidivism; 5,278 examples after standard filtering). We use sex as the protected attribute for Adult and German, and race (African-American vs Caucasian) for COMPAS.

\paragraph{Metrics.}
Accuracy for performance; equal opportunity gap (EOpp $= |\text{TPR}_{a=1} - \text{TPR}_{a=0}|$) and demographic parity gap (DP $= |\Pp(\hat{Y}{=}1 \mid A{=}1) - \Pp(\hat{Y}{=}1 \mid A{=}0)|$) for fairness.

\paragraph{Fairness surrogate.}
For \method{}, our primary surrogate is a \emph{mass-balanced} equal opportunity constraint that encourages $w(A{=}1, Y{=}1) \approx w(A{=}0, Y{=}1)$ (Appendix~\ref{app:constraints}). We also report results for the analogous mass-balanced demographic parity surrogate. Intuitively, these reweightings shift weak-learner training toward under-represented group--label mass, which is aligned with reducing the corresponding gap.

\paragraph{Baselines.}
\begin{itemize}
    \item \textbf{AdaBoost}: Standard AdaBoost with decision stumps.
    \item \textbf{Reweighing}: Preprocessing-based group reweighting \citep{kamiran2012preprocessing}.
    \item \textbf{Exponentiated Gradient (EG)}: Fairlearn reductions with either equal opportunity or demographic parity constraints \citep{agarwal2018reductions}.
\end{itemize}

\paragraph{Protocol.}
100 boosting rounds, decision stumps (depth-1 trees) as weak learners, and 80/20 train/test splits. We report mean $\pm$ std over 10 random splits (seeds 42--51).

\subsection{Main Results}

Table~\ref{tab:main} reports accuracy and EOpp gap across datasets using the mass-balanced EOpp surrogate, and Figure~\ref{fig:pareto} shows the corresponding Pareto frontiers. Figure~\ref{fig:pareto-dp} shows analogous tradeoffs for demographic parity using the mass-balanced DP surrogate.

\begin{table*}[t]
\caption{Accuracy and Equal Opportunity gap (EOpp) across datasets. \method{} uses the mass-balanced EOpp surrogate. We report mean $\pm$ std over 10 random train/test splits (seeds 42--51). \textbf{Rounds} is the average number of boosting rounds before termination (AdaBoost/Reweighing always use 100 rounds; EG is not iterative boosting and is marked ``--'').}
\label{tab:main}
\centering
\begin{small}
\begin{sc}
\begin{tabular}{llccc}
\toprule
Dataset & Method & Accuracy & EOpp Gap & Rounds \\
\midrule
Adult (A=sex) & AdaBoost & $0.854 \pm 0.003$ & $0.185 \pm 0.033$ & 100.0 \\
Adult (A=sex) & Reweighing & $0.853 \pm 0.004$ & $0.115 \pm 0.034$ & 100.0 \\
Adult (A=sex) & \method{} ($\epsilon=0.25$) & $0.811 \pm 0.002$ & $0.043 \pm 0.030$ & 10.0 \\
Adult (A=sex) & \method{} ($\epsilon=0.10$) & $0.804 \pm 0.002$ & $0.029 \pm 0.014$ & 4.8 \\
Adult (A=sex) & EG ($\epsilon=0.05$) & $0.761 \pm 0.000$ & $0.000 \pm 0.000$ & -- \\
\midrule
German (A=sex) & AdaBoost & $0.745 \pm 0.026$ & $0.097 \pm 0.049$ & 100.0 \\
German (A=sex) & Reweighing & $0.749 \pm 0.025$ & $0.057 \pm 0.031$ & 100.0 \\
German (A=sex) & \method{} ($\epsilon=0.25$) & $0.733 \pm 0.018$ & $0.046 \pm 0.042$ & 25.7 \\
German (A=sex) & \method{} ($\epsilon=0.10$) & $0.735 \pm 0.016$ & $0.063 \pm 0.047$ & 17.4 \\
German (A=sex) & EG ($\epsilon=0.05$) & $0.700 \pm 0.000$ & $0.000 \pm 0.000$ & -- \\
\midrule
COMPAS (A=race) & AdaBoost & $0.669 \pm 0.009$ & $0.188 \pm 0.044$ & 100.0 \\
COMPAS (A=race) & Reweighing & $0.662 \pm 0.010$ & $0.179 \pm 0.047$ & 100.0 \\
COMPAS (A=race) & \method{} ($\epsilon=0.25$) & $0.669 \pm 0.009$ & $0.188 \pm 0.044$ & 100.0 \\
COMPAS (A=race) & \method{} ($\epsilon=0.10$) & $0.660 \pm 0.011$ & $0.169 \pm 0.046$ & 17.3 \\
COMPAS (A=race) & EG ($\epsilon=0.05$) & $0.586 \pm 0.017$ & $0.050 \pm 0.038$ & -- \\
\bottomrule
\end{tabular}
\end{sc}
\end{small}
\end{table*}

\begin{figure*}[t]
\centering
\includegraphics[width=\textwidth]{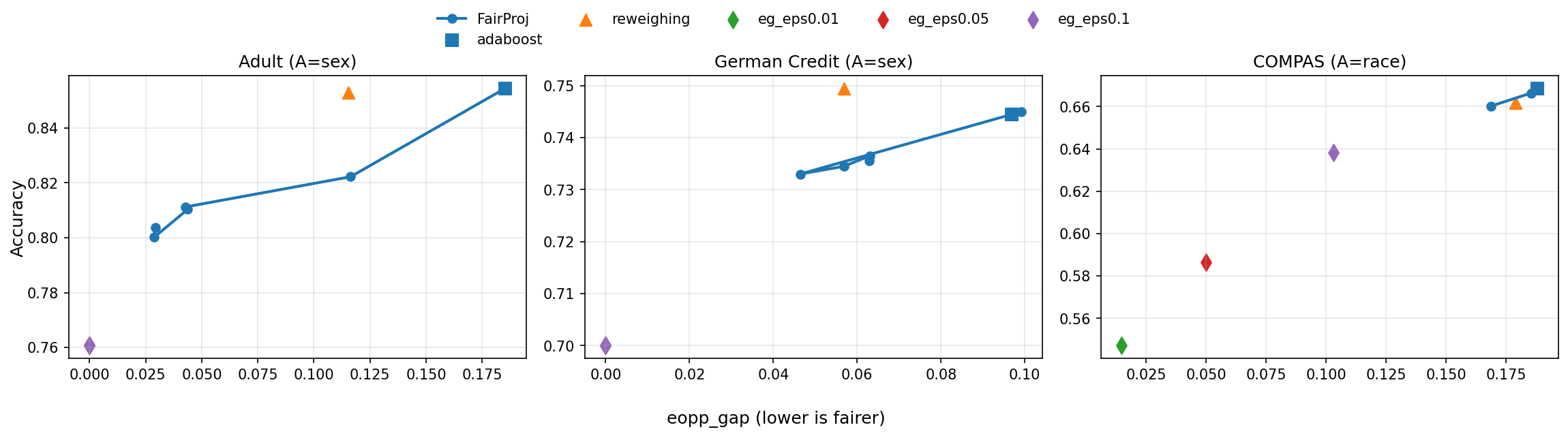}
\caption{EOpp fairness--accuracy tradeoffs across Adult (A=sex), German (A=sex), and COMPAS (A=race). \method{} traces a Pareto curve as $\epsilon$ varies (smaller is tighter). On Adult and German, tightening $\epsilon$ yields large EOpp reductions with moderate accuracy loss; on COMPAS the gains are smaller. EG achieves lower gaps but can incur a larger accuracy cost.}
\label{fig:pareto}
\end{figure*}

\begin{figure*}[t]
\centering
\includegraphics[width=\textwidth]{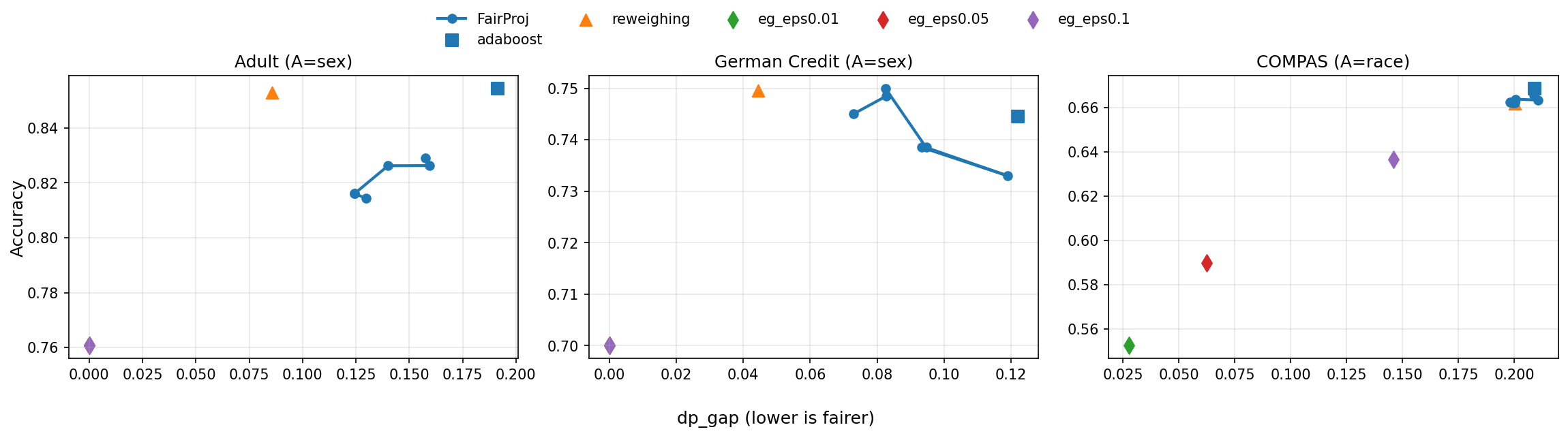}
\caption{DP fairness--accuracy tradeoffs across datasets for the mass-balanced DP surrogate. Smaller $\epsilon$ can reduce DP gap relative to unconstrained AdaBoost, at the cost of early termination and reduced accuracy. Reweighing is often competitive on DP since it directly targets group--label mass balancing.}
\label{fig:pareto-dp}
\end{figure*}

\paragraph{Demographic parity.}
Figure~\ref{fig:pareto-dp} shows that the same projection mechanism can be used with a DP surrogate. On Adult and German, tighter $\epsilon$ can reduce DP gap relative to AdaBoost, but the strongest reductions require smaller slacks than in the EOpp setting. Reweighing is often competitive on DP since it directly targets group--label mass, while EG can achieve lower DP gaps at a larger accuracy cost.

\paragraph{Across-dataset dynamics.}
Table~\ref{tab:main} also highlights the dataset-dependent stopping behavior induced by projection. For tight $\epsilon$, \method{} frequently terminates early because weak learners trained on $w^t$ lose positive edge under $q^t$; for looser $\epsilon$, the projection may be inactive and \method{} recovers unconstrained boosting (e.g., 100 rounds on COMPAS at $\epsilon=0.25$).

\paragraph{Key observations.}
(1) On Adult and German, \method{} achieves substantially lower EOpp gap than both AdaBoost and Reweighing, with a moderate accuracy reduction. On COMPAS, the improvement is smaller.
(2) Tightening $\epsilon$ increases the projection divergence (fairness cost) and triggers earlier stopping because weak learners trained on $w^t$ eventually lose positive edge under $q^t$ (i.e., $\varepsilon_t^{(q)} \to 0.5$).
(3) EG can achieve smaller EOpp gaps, but may incur a larger accuracy cost than \method{} at comparable gaps.
(4) \method{} also yields nontrivial DP tradeoffs (Figure~\ref{fig:pareto-dp}), illustrating that the projection framework extends beyond equal opportunity.

\subsection{Validating Theoretical Predictions}

We validate the edge-transfer analysis by examining training dynamics for \method{} with $\epsilon = 0.25$ on Adult (10 rounds before termination). Figure~\ref{fig:training-curves} visualizes these dynamics.

\begin{figure}[t]
\centering
\includegraphics[width=\columnwidth]{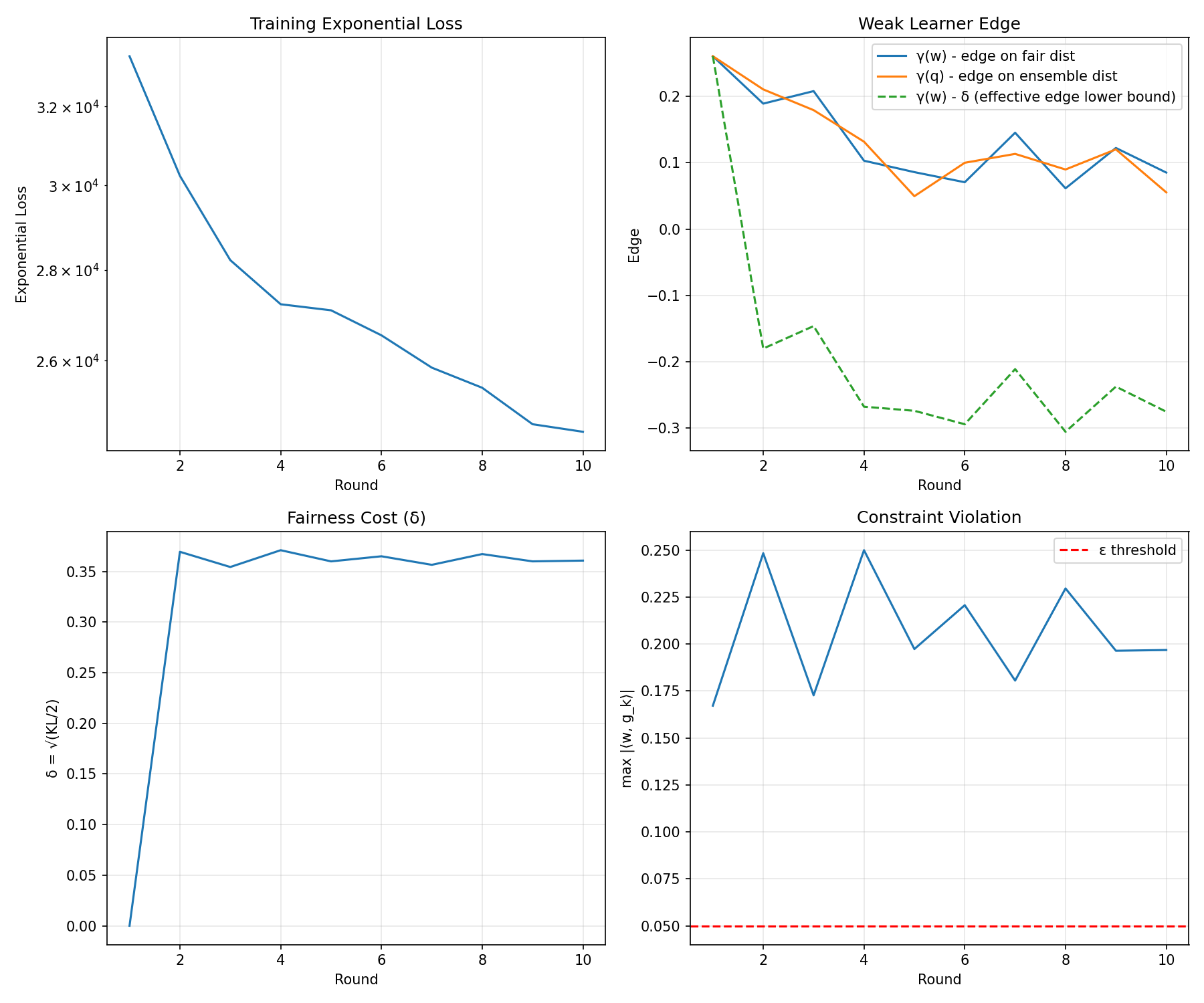}
\caption{Training dynamics for \method{} with $\epsilon = 0.25$ on Adult. \textbf{Top-left:} Exponential loss decreases monotonically. \textbf{Top-right:} Edge under fair distribution $\gamma_t^{(w)}$ (blue), edge under ensemble distribution $\gamma_t^{(q)}$ (orange), and effective edge lower bound $\gamma_t^{(w)} - \delta_t$ (green dashed). The bound from Lemma~\ref{lem:edge-transfer} holds at every round. \textbf{Bottom-left:} Fairness cost $\delta_t$ stabilizes after initial rounds. \textbf{Bottom-right:} Constraint violation stays below threshold $\epsilon$.}
\label{fig:training-curves}
\end{figure}

\paragraph{Edge transfer bound.}
We compute $\gamma_t^{(w)}$ (edge under fair distribution $w^t$), $\gamma_t^{(q)}$ (edge under ensemble distribution $q^t$), and $\delta_t = \sqrt{\KL(w^t \| q^t)/2}$. As shown in Figure~\ref{fig:training-curves} (top-right), the bound $\gamma_t^{(q)} \geq \gamma_t^{(w)} - \delta_t$ from Lemma~\ref{lem:edge-transfer} holds for all 10 rounds. The averages are $\gamma_t^{(w)} = 0.133$, $\gamma_t^{(q)} = 0.131$, and $\delta_t = 0.326$.

\paragraph{Monotonic loss decrease.}
Figure~\ref{fig:training-curves} (top-left) confirms that the exponential loss $L_{\exp}(f_t)$ decreases monotonically until the stopping criterion $\varepsilon_t^{(q)} \geq 0.5$ is met, as predicted by Theorem~\ref{thm:main}.

\paragraph{Constraint satisfaction.}
Figure~\ref{fig:training-curves} (bottom-right) shows the surrogate constraint $|\langle w^t, g_k \rangle|$ staying near $\epsilon$ when the projection is active, confirming feasibility.

\subsection{Why Does Mass-Balanced Reweighting Reduce EOpp Gap?}

The mass-balanced surrogate forces the training distribution to allocate comparable \emph{total} weight to positive examples from each group. This shifts weak-learner training toward the disadvantaged group's positives and empirically reduces EOpp gap. Still, \method{} does not directly constrain predictions, so the mapping from surrogate satisfaction to classifier fairness is empirical rather than guaranteed.
\begin{enumerate}
    \item The weak learner $h_t$ is trained to minimize \emph{weighted error}, not to directly equalize group-specific performance.
    \item The ensemble combines weak learners with coefficients $\alpha_t$ computed under $q^t$, not $w^t$.
    \item Classifier fairness depends on final predictions, which are a nonlinear function of the entire boosting trajectory.
\end{enumerate}

Methods like Exponentiated Gradient directly constrain the classifier's predictions via a game-theoretic reduction, achieving true classifier fairness at the cost of accuracy. Our contribution complements these approaches by quantifying how fairness-motivated reweighting impacts boosting dynamics through an explicit fairness cost term.

\subsection{Effect of $\epsilon$ on Training Dynamics}

\begin{table}[t]
\caption{Effect of $\epsilon$ on training dynamics on Adult. Smaller $\epsilon$ increases fairness cost $\delta_t$, reducing effective edge and causing earlier termination.}
\label{tab:epsilon}
\begin{center}
\begin{small}
\begin{sc}
\begin{tabular}{lccc}
\toprule
$\epsilon$ & Avg $\delta_t$ & Rounds & Accuracy \\
\midrule
0.40 & $0.000 \pm 0.000$ & 100.0 & $0.854 \pm 0.003$ \\
0.25 & $0.325 \pm 0.003$ & 10.0 & $0.811 \pm 0.002$ \\
0.15 & $0.477 \pm 0.003$ & 7.4 & $0.800 \pm 0.002$ \\
\bottomrule
\end{tabular}
\end{sc}
\end{small}
\end{center}
\end{table}

Table~\ref{tab:epsilon} shows that as $\epsilon$ decreases, the average fairness cost $\delta_t$ increases and the number of boosting rounds decreases (because weak learners trained on $w^t$ become less effective under $q^t$). This directly illustrates the tradeoff quantified by the edge-transfer analysis.

\section{Discussion and Limitations}
\label{sec:discussion}

\paragraph{Reweighting surrogate is not classifier fairness.}
We emphasize that \method{} enforces constraints on \emph{training distributions}, not the final classifier. The empirical question is whether this surrogate leads to fairer classifiers. Our experiments investigate this, but we do not provide theoretical guarantees on classifier fairness.

\paragraph{Effective edge assumption.}
Theorem~\ref{thm:main} requires $\gamma_t^{(w)} > \delta_t$. If fairness constraints are too tight ($\epsilon$ too small), the projection divergence $\delta_t$ may exceed the weak learner edge, violating this assumption. In practice, we terminate boosting when $\varepsilon_t^{(q)} \geq 0.5$.

\paragraph{Incompatible fairness definitions.}
Some fairness definitions are mutually incompatible \citep{kleinberg2016inherent,chouldechova2017fair}. Our framework handles this via slack $\epsilon$, but cannot satisfy impossible constraints.

\paragraph{Scalability.}
The $K$-dimensional dual solve adds $O(nK)$ cost per round. For typical $K \in \{1, 2\}$, this is negligible. For intersectional fairness with many groups, the cost grows.

\paragraph{Dataset-dependent effectiveness.}
The weaker results on COMPAS compared to Adult and German likely reflect dataset-specific factors. COMPAS has more balanced group sizes and base rates, so the mass-balanced surrogate induces smaller distributional shifts (the projection is often inactive at $\epsilon = 0.25$). Additionally, the features predictive of recidivism may be more entangled with race than income features are with sex, limiting what reweighting alone can achieve. This highlights that \method{}'s effectiveness depends on the alignment between the chosen surrogate and the dataset's fairness structure.

\section{Conclusion}
\label{sec:conclusion}

We introduced \method{}, a projection-based approach to fair boosting that cleanly separates the ensemble-induced distribution (for $\alpha_t$ computation) from the fair training distribution (for weak learner fitting). Our main theoretical contribution is the \emph{edge transfer} analysis (Lemma~\ref{lem:edge-transfer}) showing that projecting distributions reduces effective edge by at most $\sqrt{\KL(w^t \| q^t)/2}$. This yields an AdaBoost-like exponential-loss bound (Theorem~\ref{thm:main}) with an explicit ``fairness cost'' term.

We are explicit about what this paper does and does not claim: we enforce fairness as a reweighting surrogate on training distributions, not a direct guarantee on classifier fairness. Whether the surrogate translates to fairer classifiers is an empirical question we investigate. Our contribution is the theoretical framework for analyzing this tradeoff within boosting's convergence machinery.

\section*{Broader Impact}

This paper presents work whose goal is to advance the field of algorithmic fairness within machine learning. We develop methods for training classifiers under fairness constraints. Group fairness is one approach among many, and practitioners should carefully consider which fairness criteria are appropriate for their context. Our method provides a reweighting surrogate that may improve classifier fairness, but does not provide formal guarantees on final classifier behavior. We encourage practitioners to validate fairness properties on their specific deployment contexts.

\bibliographystyle{icml2026}
\bibliography{forward_aware_boosting_refs}

\newpage
\appendix
\onecolumn

\section{Constraint Features for Common Fairness Notions}
\label{app:constraints}

Let $n_a = |\{i : a_i = a\}|$, $n_a^+ = |\{i : a_i = a, y_i = 1\}|$, $n_a^- = |\{i : a_i = a, y_i = -1\}|$.

\paragraph{Demographic Parity Surrogate.}
\begin{equation}
    g_{\text{DP}}(i) = \bone[a_i = 1] - \bone[a_i = 0].
\end{equation}
This balances total weight across groups regardless of label.

\paragraph{Equal Opportunity Surrogate.}
\begin{equation}
    g_{\text{EOpp}}(i) = \bone[y_i = 1] \cdot \left(\bone[a_i = 1] - \bone[a_i = 0]\right).
\end{equation}
This balances total weight on positive examples across groups.

\paragraph{Equalized Odds Surrogate.}
Two constraints ($K=2$):
\begin{align}
    g_1(i) &= \bone[y_i = 1] \cdot \left(\bone[a_i = 1] - \bone[a_i = 0]\right), \\
    g_2(i) &= \bone[y_i = -1] \cdot \left(\bone[a_i = 1] - \bone[a_i = 0]\right).
\end{align}

\section{Proof Details}
\label{app:proofs}

\subsection{Proof of Lemma~\ref{lem:kl-dual}}

The primal problem is:
\begin{align}
    \min_{w \in \Delta_n} &\quad \KL(w \| q) \\
    \text{s.t.} &\quad |\langle w, g_k \rangle| \leq \epsilon, \quad k = 1, \ldots, K.
\end{align}

Introducing multipliers $\lambda_k^+, \lambda_k^- \geq 0$ for upper/lower bounds:
\begin{equation}
    \mathcal{L}(w, \lambda^+, \lambda^-) = \sum_i w_i \log\frac{w_i}{q_i} + \sum_k \lambda_k^+ (\langle w, g_k \rangle - \epsilon) + \sum_k \lambda_k^- (-\langle w, g_k \rangle - \epsilon).
\end{equation}

Setting $\frac{\partial \mathcal{L}}{\partial w_i} = 0$ (with normalization constraint via an additional multiplier $\nu$):
\begin{equation}
    \log\frac{w_i}{q_i} + 1 + (\lambda^+ - \lambda^-)^\top g(i) + \nu = 0.
\end{equation}
Thus $w_i \propto q_i \exp(-(\lambda^+ - \lambda^-)^\top g(i))$.

Setting $\lambda = \lambda^+ - \lambda^-$ and normalizing:
\begin{equation}
    w_i = \frac{q_i \exp(-\lambda^\top g(i))}{Z(\lambda)}, \quad Z(\lambda) = \sum_j q_j \exp(-\lambda^\top g(j)).
\end{equation}

The dual function is:
\begin{equation}
    d(\lambda^+, \lambda^-) = -\log Z(\lambda) - \epsilon(\|\lambda^+\|_1 + \|\lambda^-\|_1).
\end{equation}

Optimizing over $\lambda^+, \lambda^-$ with $\lambda = \lambda^+ - \lambda^-$ and using $\|\lambda^+\|_1 + \|\lambda^-\|_1 \geq \|\lambda\|_1$ (with equality when the signs are chosen correctly), we get:
\begin{equation}
    \max_\lambda d(\lambda) = \max_\lambda \{-\log Z(\lambda) - \epsilon \|\lambda\|_1\} = -\min_\lambda \{\log Z(\lambda) + \epsilon \|\lambda\|_1\}.
\end{equation}

By strong duality, $\KL(w^* \| q) = -d(\lambda^*) = -(-\log Z(\lambda^*) - \epsilon \|\lambda^*\|_1) = -\log Z(\lambda^*) - \epsilon \|\lambda^*\|_1$.

\textbf{Sign check:} At $\lambda = 0$, $Z(0) = 1$, so the dual objective is $\log(1) + 0 = 0$. The optimal value is $\leq 0$, hence $\log Z(\lambda^*) + \epsilon\|\lambda^*\|_1 \leq 0$, so $-\log Z(\lambda^*) - \epsilon\|\lambda^*\|_1 \geq 0$. This confirms $\KL(w^* \| q) \geq 0$.

\subsection{Proof of Theorem~\ref{thm:main}}

We prove each step carefully.

\textbf{Step 1: Exponential loss recursion.}
By definition of $q^t$:
\begin{equation}
    q_i^t = \frac{\exp(-y_i f_{t-1}(x_i))}{L_{\exp}(f_{t-1})}.
\end{equation}
Thus:
\begin{align}
    L_{\exp}(f_t) &= \sum_i \exp(-y_i f_t(x_i)) = \sum_i \exp(-y_i f_{t-1}(x_i)) \exp(-\alpha_t y_i h_t(x_i)) \\
    &= L_{\exp}(f_{t-1}) \sum_i q_i^t \exp(-\alpha_t y_i h_t(x_i)).
\end{align}

\textbf{Step 2: Per-round bound.}
With $\alpha_t = \frac{1}{2}\log\frac{1-\varepsilon_t^{(q)}}{\varepsilon_t^{(q)}}$ where $\varepsilon_t^{(q)} = \sum_i q_i^t \bone[y_i \neq h_t(x_i)]$:
\begin{align}
    \sum_i q_i^t \exp(-\alpha_t y_i h_t(x_i)) &= \varepsilon_t^{(q)} e^{\alpha_t} + (1-\varepsilon_t^{(q)}) e^{-\alpha_t} \\
    &= \varepsilon_t^{(q)} \sqrt{\frac{1-\varepsilon_t^{(q)}}{\varepsilon_t^{(q)}}} + (1-\varepsilon_t^{(q)}) \sqrt{\frac{\varepsilon_t^{(q)}}{1-\varepsilon_t^{(q)}}} \\
    &= 2\sqrt{\varepsilon_t^{(q)}(1-\varepsilon_t^{(q)})}.
\end{align}

With $\gamma_t^{(q)} = \frac{1}{2} - \varepsilon_t^{(q)}$, we have $\varepsilon_t^{(q)}(1-\varepsilon_t^{(q)}) = \frac{1}{4} - (\gamma_t^{(q)})^2$, so:
\begin{equation}
    2\sqrt{\varepsilon_t^{(q)}(1-\varepsilon_t^{(q)})} = \sqrt{1 - 4(\gamma_t^{(q)})^2} \leq \exp(-2(\gamma_t^{(q)})^2),
\end{equation}
using $\sqrt{1-x} \leq e^{-x/2}$.

\textbf{Step 3: Edge transfer.}
By Lemma~\ref{lem:edge-transfer}:
\begin{equation}
    \gamma_t^{(q)} \geq \gamma_t^{(w)} - \sqrt{\frac{1}{2}\KL(w^t \| q^t)} = \gamma_t^{(w)} - \delta_t.
\end{equation}

Since $(\gamma_t^{(q)})^2 \geq (\gamma_t^{(w)} - \delta_t)^2$ when $\gamma_t^{(w)} > \delta_t > 0$:
\begin{equation}
    \exp(-2(\gamma_t^{(q)})^2) \leq \exp(-2(\gamma_t^{(w)} - \delta_t)^2).
\end{equation}

\textbf{Step 4: Telescope.}
\begin{equation}
    L_{\exp}(f_T) = L_{\exp}(f_0) \prod_{t=1}^T \left(\sum_i q_i^t \exp(-\alpha_t y_i h_t(x_i))\right) \leq n \prod_{t=1}^T \exp(-2(\gamma_t^{(w)} - \delta_t)^2).
\end{equation}

\section{Implementation Details}
\label{app:implementation}

\paragraph{Dual solver.}
We solve the dual problem \eqref{eq:dual-problem} using scipy's L-BFGS-B optimizer. For the $\ell_1$ term, we use the smooth approximation $|\lambda_k| \approx \sqrt{\lambda_k^2 + \mu}$ with $\mu = 10^{-8}$, or the split-variable formulation with $\lambda = \lambda^+ - \lambda^-$ and non-negativity constraints.

\paragraph{Numerical stability.}
When computing $q^t \propto \exp(-y_i f_{t-1}(x_i))$, we subtract $\max_i(-y_i f_{t-1}(x_i))$ before exponentiating to avoid overflow.

\paragraph{Stopping criterion.}
We stop boosting when $\varepsilon_t^{(q)} \geq 0.5$ (no useful weak learner under $q^t$) or after $T$ rounds.

\end{document}